\newtheorem{theorem}{Theorem}
\newtheorem{definition}{Definition}%[section]
\title{Intrinsic Non-stationary Covariance Function \\for Climate Modeling} %TODO
\author{Chintan A. Dalal\\ 
	Department of Computer Science \\ 
	Rutgers, The State University of New Jersey \\ \href{mailto:chintan.dalal@rutgers.edu}{\nolinkurl{chintan.dalal@rutgers.edu}} \\ \And
 Vladimir Pavlovic\\
 Department of Computer Science \\ 
	Rutgers, The State University of New Jersey \\ \href{mailto:vladimir@cs.rutgers.edu}{\nolinkurl{vladimir@cs.rutgers.edu}} \\ \And
 Robert E. Kopp \\
 Department of Earth \& Planetary Sciences \\
Rutgers, The State University of New Jersey \\
\href{mailto:robert.kopp@rutgers.edu}{\nolinkurl{robert.kopp@rutgers.edu}} }%1, Contact 2, Contact 3}
\begin{document} 

%\twocolumn[

%\icmltitle{Submission and Formatting Instructions for \\ 
%           International Conference on Machine Learning (ICML 2015)}
%\icmltitle{Regression using Intrinsic Statistics for a Nonstationary Covariance Function}
%\icmltitle{Intrinsic Nonstationary Covariance Function using Riemannian Manifold}
%\icmltitle{A New Approach to Nonstationary Covariance Function using Riemannian Manifold}
%TODO : change title and make the first letter of the content word capital

% It is OKAY to include author information, even for blind
% submissions: the style file will automatically remove it for you
% unless you've provided the [accepted] option to the icml2015
% package.
%\icmlauthor{Chintan Dalal}{cad289@cs.rutgers.edu}
%\icmladdress{Rutgers University,
 %           110 Frelinghuysen Road, Piscataway, NJ 08854 USA}
%\icmlauthor{Vladimir Pavlovic}{vladimir@cs.rutgers.edu}
%\icmladdress{Rutgers University,
  %          110 Frelinghuysen Road, Piscataway, NJ 08854 USA}
            
% You may provide any keywords that you 
% find helpful for describing your paper; these are used to populate 
% the "keywords" metadata in the PDF but will not be shown in the document
%\icmlkeywords{Sea Level, Regression, Riemannian Manifold, Gaussian Process}

%\vskip 0.3in
%]

\maketitle

\begin{abstract} 

Designing a covariance function that represents the underlying correlation is a crucial step in modeling complex natural systems, such as climate models. Geospatial datasets at a global scale usually suffer from non-stationarity and non-uniformly smooth spatial boundaries. A Gaussian process regression using a non-stationary covariance function has shown promise for this task, as this covariance function adapts to the variable correlation structure of the underlying distribution. In this paper, we generalize the non-stationary covariance function to address the aforementioned global scale geospatial issues. We define this generalized covariance function as an intrinsic non-stationary covariance function, because it uses intrinsic statistics of the symmetric positive definite matrices to represent the characteristic length scale and, thereby, models the local stochastic process. Experiments on a synthetic and real dataset of relative sea level changes across the world demonstrate improvements in the error metrics for the regression estimates using our newly proposed approach.

\end{abstract} 
\section{Introduction}
\label{sec:int}

Covariance functions are a key element in the regression of geospatial data (kriging). Designing a covariance function that can capture the geospatial random field of natural processes is useful for understanding the changes occurring in climate related variables. For example, ongoing sea level changes provide an important context for understanding future coastal flood risks~\cite{church2013sea}. However, most of the climate related datasets at a global scale suffer from statistical issues of non-stationarity and non-uniformly smooth spatial boundaries. In this paper, we design a covariance function which addresses these issues in the datasets.~\looseness-1

Stochastic processes of complex systems at a global scale are known to have a regional geophysical variability. For example, the local sea-level changes occurring near Northern Europe are primarily dominated by the physics of the Glacial-isostatic adjustment, while  the local sea-level changes occurring near Japan are dominated by the tectonics~\cite{kopp2013does}. Such regional variations can be modeled using the non-stationary covariance function (Figures~\ref{fig:Orig}(a,b) depicts these issues).~\looseness-1

One of the important modeling issues in the complex geospatial dataset at a global scale is knowing the boundaries of the regional geophysical variability. Addressing this issue, which we call the non-uniformly smooth spatial boundary issue, aids in modeling the non-stationary covariance function by capturing the underlying correlation structure of a region. This paper addresses this issue for kriging.~\looseness-1

For standard kriging models (GP), the methods given in ~\cite{rasmussen2006gaussian} are widely used in many climate models, including models of the sea level data, because it gives a non-parametric method for analyzing such a geospatial random field. Such a model can be completely defined by its covariance function, and it learns the hyper-parameters of the designed covariance function directly from the training data without making any parametric assumptions about the data.~\looseness-1

To model the non-stationarity of the stochastic process,~\cite{paciorek2004nonstationary,plagemann2008nonstationary,higdon1999non,SteinKrigging} devised a non-stationary covariance function. These methods use spatially evolving smooth kernels to represent the characteristic length scale (CLS) of the covariance function, which, in turn, models the local correlation structure of the global scale stochastic process. From these methods,~\cite{paciorek2004nonstationary} is the most promising approach for climate data, as its model is entirely in the non-parametric GP framework. Even so, it uses a homogeneous set of hyper-parameters to ensure the smoothness of the CLS in the input space. We show in our experiments that using a homogeneous set of hyperparameters is inefficient for the global-scale complex systems.~\looseness-1

%This we show, in our experiments, to be inefficient for the global-scale complex systems.
Non-stationarity has also been addressed by~\cite{nychka2002multiresolution}. They use a multi-resolution basis function, fixed rectangular grid, and explicit thresholding scheme. The approach of setting fixed grids renders this method unviable for datasets with non-uniformly smooth boundaries. While~\cite{paciorek2004nonstationary} validates their model on precipitation estimates in Colorada (USA),~\cite{nychka2002multiresolution} uses the daily ozone rate in the Midwest. Both of these datasets are locally dense, and, in turn, do not address the global scale climate data issues.~\looseness-1

In the literature of estimating global scale sea-level changes, techniques given by~\cite{hay2015probabilistic} use a Kalman filter model, and~\cite{kopp2013does} use a mixture of GPs at various regional resolutions. However, these methods fail to tightly couple the local and global estimates in the spatial dimension. Additionally, these methods rely on the domain knowledge and a parametric framework.~\looseness-1% [Not sure if i should keep it or not?]

In this paper, we propose a covariance function to model the data and address the aforementioned issues at a global scale, and we call this covariance function the intrinsic non-stationary covariance function. We use intrinsic statistics~\cite{amari1987differential} on the space of the CLS to capture the non-uniformly smooth spatial boundaries for the non-stationarity. Furthermore, we provide an algorithm for kriging using the proposed intrinsic covariance function, and we validate its applicability on two global scale datasets.~\looseness-1
  
\begin{figure*}[t]
\centering
%\begin{minipage}{0.32\textwidth}GIA_worldview_nips15_1.png
\begin{minipage}{0.49\textwidth}
               \includegraphics[width=\linewidth]{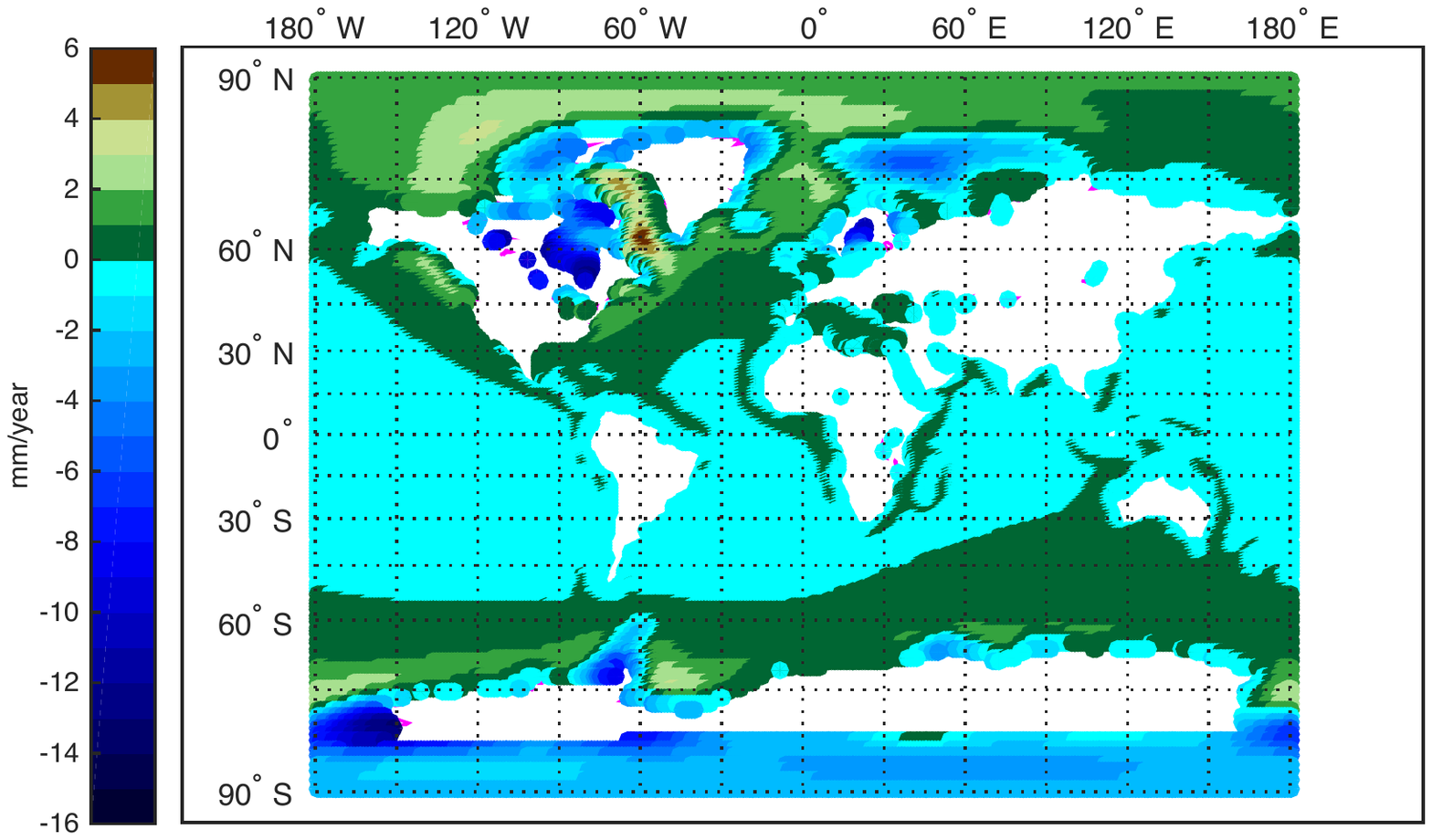}\label{fig:GIA} 
       % \vspace{-1.5em}
    \subcaption{}
    \end{minipage}
    \hspace{\fill}
    %\begin{minipage}{0.32\textwidth}
    \begin{minipage}{0.49\textwidth}
    \includegraphics[width=\linewidth]{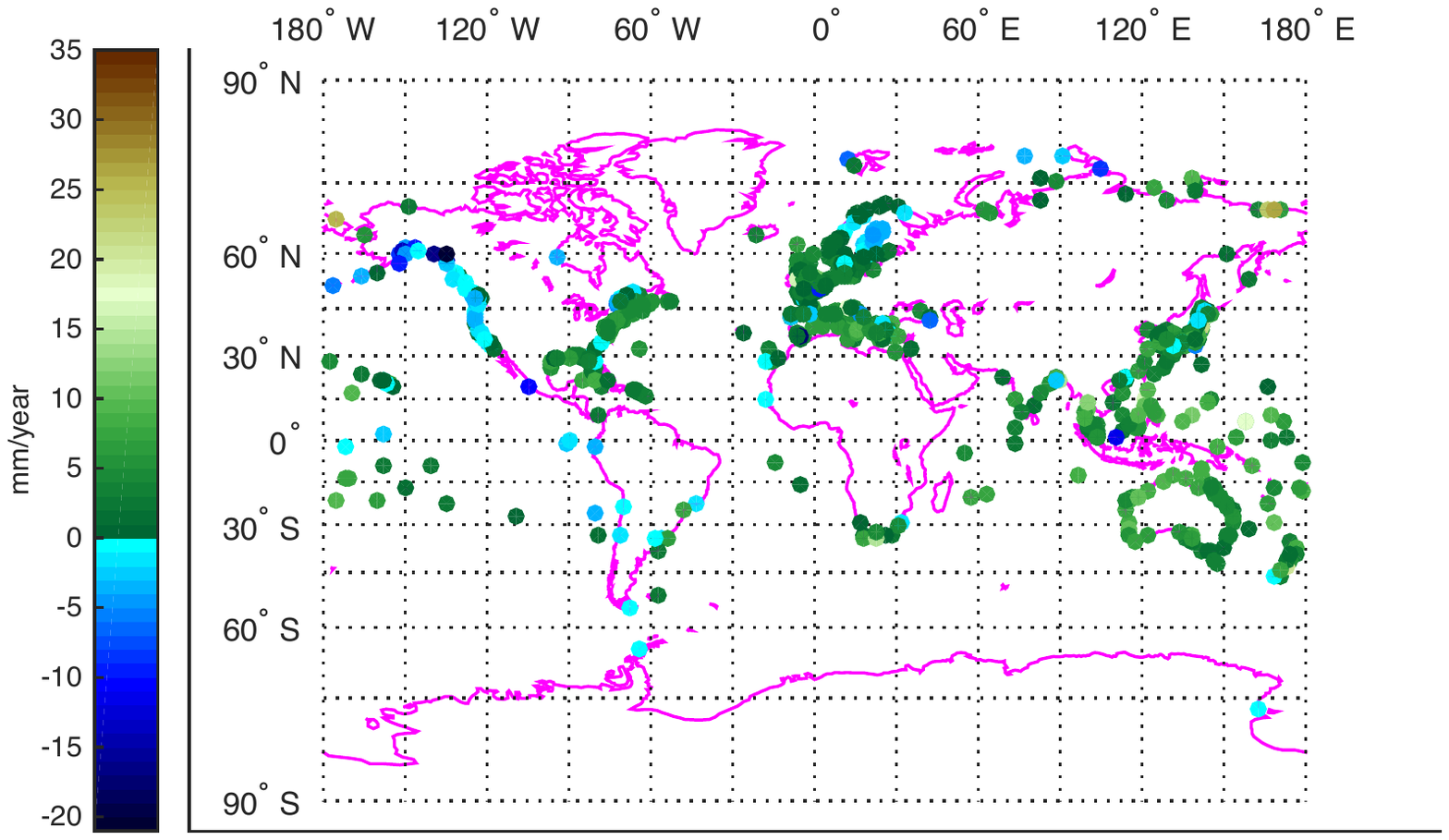}\label{fig:SL}\vspace{-.5em}
    \subcaption{}
    \end{minipage}
     \hspace{\fill}
    \vspace{-1em}
\caption{Experimental datasets. (a) Synthetic data of the relative sea-level change from the Glacial-isostatic adjustment model, and (b) real tide gauge data of the rate of change of the sea level during the last 20 years across the world.}% (c) simulated dataset most commonly used in the Nonstationary covariance function literature.} 
\label{fig:Orig}
\vspace{-1.5em}
\end{figure*}

%\vspace{-1.0em}
\section{Covariance functions used in climate modeling}
%In this section, we give the background for stationary and non-stationary covariance functions for kriging that we use to develop our proposed framework of the intrinsic non-stationary covariance function. 
In this section, we provide relevant background about stationary and non-stationary covariance functions used in kriging, which helped in the development of our proposed framework of the intrinsic non-stationary covariance function. The standard kriging (GP) model tries to recover the underlying function, $f$, where $y_i \sim f(x_i) + \eta$, from $n$ observed data points $\{(x,y)\}_{i=1}^n$. The learned model is then used to compute the predictive distribution $p(y^* | x^*, f)$ for a test point $x^*$. Here, $x \in \mathbb{R}^d$ are points in the input space of dimension $d$, $y \in \mathbb{R}$ are observed values, and $\eta$ is observed noise. In our problem setting, $x$ represents the 2D spatial coordinates (latitude and longitude) and $y$ represents the rate of change of the sea level at $x$.~\looseness-1

Assuming Gaussian noise $\eta \sim \mathcal{N}(0,\sigma_n^2)$ with a constant variance term $\sigma_n^2$, the predictive distribution is then given by $\mathcal{N}(\mu, \sigma^2)$, where: $\mu = k_{*}^{T}(K + \sigma_{n}^{2}I)^{-1}y$, $\sigma^{2} = k_{**} - k_{*}^{T}(K + \sigma_{n}^{2}I)^{-1}k_{*} + \sigma_{n}^{2}$, $K(i,j) = k(x_i,x_j)$, $k_{*}(i)=k(x*,x_i)$, $k_{**}=k(x*,x*)$, $y = (y_1, \cdots ,y_n)^{T}$, and $I$ is the identity matrix~\cite{rasmussen2006gaussian}. Here, $k(x_i,x_j)$ is the covariance function which we are interested in modeling.~\looseness-1

\subsection{Stationary covariance function}
For kriging, the correlation structure of the data is commonly modeled as a stationary covariance function when the underlying stochastic process of a function ($f$) is assumed to be stationary. A stationary isotropic covariance function that is motion and translation invariant can be defined as $k(x_i,x_j) = \phi(\frac{d^2(x_i,x_j)}{l^2})$. Here, $\phi:[0,\infty ) \rightarrow \mathbb{R}$ is a positive definite function, $l^2$ is the CLS, and $d(x_i,x_j)$ is the distance function (in geophysical measurements, this is commonly angular distance). As can be seen from the construction, when the CLS is large the correlation between the input space points is spatially small, and vice versa. In other words, the points that are far apart have a small effect on the inference step. ~\looseness-1

When the CLS is differing with respect to the input space dimensions, $d$, the covariance function is called an anisotropic covariance function. One such form of this covariance function is given by $k(x_i,x_j) = \phi(d(x_i,x_j)^T(\Sigma)^{-1}d(x_i,x_j))$. Here, the CLS, $\Sigma : \mathbb{R}^d \rightarrow S^+(d,\mathbb{R})$, can be decomposed into $\Sigma = \Gamma^TD\Gamma$. $\Gamma$ is a $d \times k$ column vector and it gives the direction of relevance, $D$ is a $k \times k$ diagonal matrix and it gives the magnitude of relevance, and $k$ is the relevant number of axis-aligned dimensions. %TODO - its not SO(d) its S^+(m,R)
% This means that stationary covariance function has CLS that has the axis is aligned(parallel) to the coordinate space dimensions. This is not true for non-stationary covariance.
 
To date, various $\phi$ for $k(\cdot,\cdot)$ have been proposed. For spatial statistics, the Mat\'{e}rn covariance function is a standard choice due to its flexibility in capturing the varying smoothness of the underlying distribution:~\looseness-1

\vspace{-1em}
\begin{flalign}\label{eq:mat}
k(x_i,x_j) = K_S(\nu,\sqrt{Q_{ij}}) = \frac{{(\sqrt{2\nu Q})}^{\nu} K_{\nu}(\sqrt{2\nu Q})}{{\sigma_f}^{-2} 2^{\nu-1} \Gamma(\nu)},
\end{flalign}

\noindent where $Q_{ij} = \frac{d^2{(x_{i},x_{j})}}{l^2}$, $\nu$ is the smoothness hyperparameter that controls the differentiability of the function, $\sigma_f$ is the signal variance, $K_\nu$ is the modified Bessel function, and $\Gamma(\nu)$ is the gamma distribution.

\subsection{Non-stationary covariance function}\label{sec:nsgp}
% Paciorek
The non-stationary covariance function can be modeled using a covariance function that varies its CLS with respect to the input point of interest. The non-parametric model form for such a covariance function is given by  $k(x_i,x_j) = \phi(Q_{ij})$, $Q_{ij} = d(x_i,x_j)^T(\frac{\Sigma_i + \Sigma_j}{2})^{-1}d(x_i,x_j)$, and $\Sigma_i := \Sigma(x_i) =  \Gamma(x_i)^TD(x_i)\Gamma(x_i)$. For this non-stationary covariance function to be valid, \cite{higdon1999non} shows (using a convolution of kernels) that $\Sigma(x) : \mathbb{R}^d \rightarrow S^+(d,\mathbb{R})$ should be spatially evolving smooth kernels of symmetric positive definite matrices. % TODO - interpretation small large kernels and effect of large and small

A closed-form solution of the nonstationary Mat\'{e}rn covariance function is given by \cite{paciorek2004nonstationary} as:~\looseness-1

\vspace{-1em}
\begin{flalign}\label{eq:GloblNSGPcov}
k(x_i,x_j) = K_{NS}(x_i,x_j) = \frac{2^{d/2}\sigma_f^{2}{|\Sigma_i|}^{\frac{1}{4}}  {|\Sigma_j|}^{\frac{1}{4}}} {{|\frac{{\Sigma_i + \Sigma_j}}{2}|}^{\frac{1}{2}}} K_S(\nu,\sqrt{Q_{ij}}).
%R^{NS}(x_i,x_j) &= {|\Sigma_i|}^{\frac{1}{4}}  {|\Sigma_j|}^{\frac{1}{4}} {|\frac{{\Sigma_i + \Sigma_j}}{2}|}^{\frac{-1}{2}}  R^{S}(\sqrt{Q_{ij}}) & \\
\end{flalign}

Intuitively, Equation \eqref{eq:GloblNSGPcov} indicates that the covariance between two observed values ($y_i$) is affected by the convolution (arithmetic average) of the local covariates (kernels) at the input locations ($x_i$). Consequently, this results in a variable CLS at the target values ($y^*$). ~\looseness-1%The approach to model $\Sigma_i$ as a locally uniformly stationary GP for the purpose of differentibility constraints on $R_{NS}$ causes it to perform better only for short correlation scales in its latent space.

\subsection{Spatially evolving kernels for the characteristic length scales}
% TODO - say this is a general method and not particular to paciorek
% TODO - Last line in this section is confusing ( remove or fix it)
For modeling the CLS,~\cite{higdon1999non} uses a predetermined area of the ellipse,~\cite{fuglstad2013non} uses B-splines, and~\cite{paciorek2004nonstationary} uses a stationary GP. All of these methods capture the underlying local correlation of the data using $N$ neighbors of $x_i$. In this paper, we are interested in models that ensure a locally smooth manifold of $\Sigma_i$, which lies in the space of positive definite matrices $S^+(d,\mathbb{R})$ in a non-parameteric fashion, while still being entirely in a GP framework.~\cite{paciorek2004nonstationary} proposes the following eigen-decomposition of $\Sigma_i$ for the spatial data: ~\looseness-1

\vspace{-1em}
\begin{equation}\label{eq:Ker}
\Gamma(x_i) = 
\begin{bmatrix}
	\frac{u}{l_{uv}} & \frac{-v}{l_{uv}}          \\
	\frac{v}{l_{uv}} & \frac{u}{l_{uv}}         \\
 \end{bmatrix}, D(x_i) = \begin{bmatrix}
              \log(\lambda_1) &   0       \\
              0 & \log(\lambda_2)        \\
            \end{bmatrix}, l_{uv} = \sqrt{ u^2 + v^2}.
\end{equation}

The eigen decomposition parameters $\{{u,\lambda_1}\}_i$ are each, in turn, modeled using an independent stationary GP in the latitude dimension of $x_i$, and  $\{{v,\lambda_2}\}_i$ using an independent stationary GP in the longitude dimension of $x_i$. All four independent GPs have their own sets of hyperparamters. 

\section{Intrinsic non-stationary covariance function}

\begin{figure*}[t]
\centering
    \includegraphics[width=.8\linewidth]{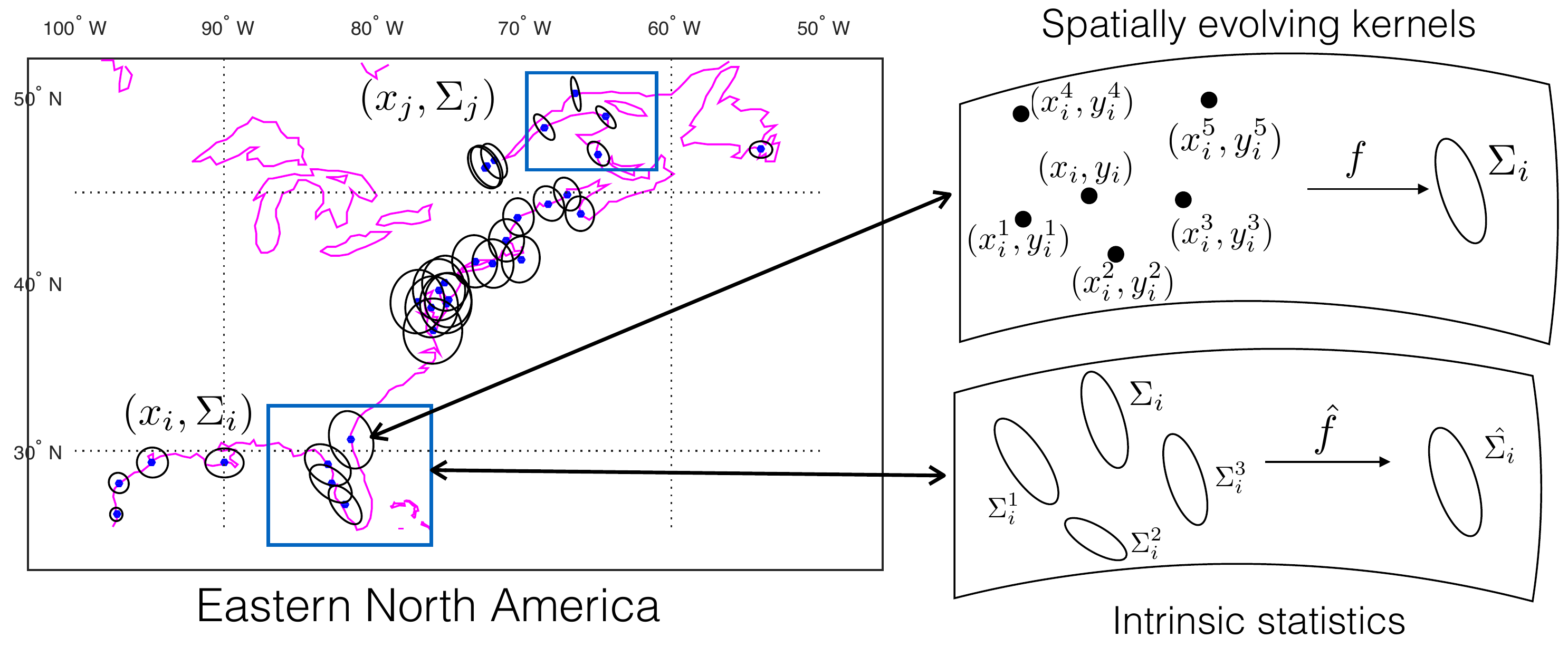}
\caption{Proposed framework for the intrinsic non-stationary covariance function.} \label{fig:Frame} 
\vspace{-1.5em}
\end{figure*}

In this section, we first propose our model of the covariance function (intrinsic non-stationary covariance function), then give background for the intrinsic statistics that we employ to construct the intrinsic non-stationary covariance, and finally provide an algorithm for its implementation in kriging.

In order to improve the model of the non-stationary covariance function that captures the variable regional information, while still maintaining a non-parametric model, we propose a new class of covariance functions and define this class as an intrinsic non-stationary covariance function. 

\begin{definition}\label{def:iNSGP}
An intrinsic non-stationary covariance function $k(\cdot,\cdot)$ assumes the form $k(x_i,x_j) = \phi(Q_{ij})$, where $Q_{ij} = d(x_i,x_j)^T(\psi_{ij})^{-1}d(x_i, x_j)$ and $\psi_{ij} := \psi(\Sigma_i, \Sigma_j)$.

Here, $\psi: S^+(d,\mathbb{R}) \rightarrow S^+(d,\mathbb{R})$ is an objective function of the form:
\begin{equation}\label{eq:psi_iNSGP}
	\psi(\Sigma_i,\Sigma_j) := \underset{\bar{\Sigma}} {\mathrm{argmin}}~ \sum_{\{ i,j \} \in \cal{N}} d^2(\bar{\Sigma}, \Sigma_{\{ i,j \}}),
\end{equation}

where $\bar{\Sigma}$ is the CLS for the intrinsic non-stationary covariance function, $d^2(\cdot,\cdot)$ is the intrinsic distance metric, $\cal{N}$ is the neighbors (including itself) of the geospatial point of interest ${i,j}$, and $d(x_i,x_j)$ is the distance in the input space.

\end{definition}
 
In our study, the aim of the objective function $\psi(\cdot,\cdot)$ is to model the regional geophysical CLS of the underlying stochastic process. Intuitively, $\bar{\Sigma}$ represents the intrinsic mean of the CLS at the input points $\{ x_i,x_j \}$, and it incorporates the regional CLS in its convolution step for the intrinsic covariance function.

There are two important factors in modeling this function $\psi$ :
\begin{enumerate}[leftmargin=*]
	\item Finding the neighborhood points that represent the regional information.
	\item Finding a representative of the CLS that describes the statistics (up to second-order) for the region of interest.
\end{enumerate}

Figure~\ref{fig:Frame} depicts these two factors, and Section~\ref{sec:method} gives one such method to construct these classes of functions.

\begin{theorem}
The intrinsic non-stationary covariance function, as defined above, is a valid non-stationary covariance function.%; when $\Sigma : R^d \rightarrow SO(d)$ and $\psi : SO(d) \rightarrow SO(d)$.% ( same as paciorek )
\end{theorem}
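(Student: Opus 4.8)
The plan is to reduce the claim to the already-established validity of the Paciorek--Schervish non-stationary Mat\'ern covariance in Equation~\eqref{eq:GloblNSGPcov}, which by~\cite{higdon1999non,paciorek2004nonstationary} is positive definite for \emph{every} spatially evolving smooth field $\Sigma(\cdot):\mathbb{R}^d\to S^+(d,\mathbb{R})$. Concretely, I would exhibit a single such field $\widehat\Sigma(\cdot)$ for which the intrinsic covariance of Definition~\ref{def:iNSGP} equals Equation~\eqref{eq:GloblNSGPcov} evaluated at $\widehat\Sigma$. The natural candidate is the \emph{intrinsically smoothed} length-scale field, $\widehat\Sigma(x):=\underset{\bar\Sigma}{\mathrm{argmin}}\sum_{k\in\mathcal N(x)} d^2(\bar\Sigma,\Sigma_k)$, so that the pairwise quantity $\psi_{ij}$ is this smoothing applied at $x_i$ and $x_j$ and then combined by the Paciorek average. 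Under that reading, $\phi=K_S(\nu,\cdot)$ together with the determinant prefactors of Equation~\eqref{eq:GloblNSGPcov} are inherited verbatim, so positive definiteness of $K$ on any finite set of inputs follows from the cited result, and non-stationarity is immediate because $\psi_{ij}$ depends on the absolute locations $x_i,x_j$ and not merely on $d(x_i,x_j)$.

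Three facts make the reduction legitimate, and I would verify them in order. (i) $\psi_{ij}\in S^+(d,\mathbb{R})$: this is exactly where the intrinsic statistics enter --- the Fr\'echet (Karcher) mean of a finite collection of symmetric positive-definite matrices under an intrinsic metric on $S^+(d,\mathbb{R})$ exists, is unique, and is again positive definite; for the log-Euclidean metric one even has the closed form $\widehat\Sigma=\exp\!\big(\tfrac1{|\mathcal N|}\sum_k\log\Sigma_k\big)$, which is manifestly in $S^+(d,\mathbb{R})$, and for the affine-invariant Riemannian metric the same holds locally by geodesic convexity. (ii) $\psi$ is symmetric, $\psi(\Sigma_i,\Sigma_j)=\psi(\Sigma_j,\Sigma_i)$, and consistent, $\psi(\Sigma,\Sigma)=\Sigma$; both are immediate from the symmetry of the objective in Equation~\eqref{eq:psi_iNSGP} and the fact that the mean of identical matrices is that matrix. (iii) $x\mapsto\widehat\Sigma(x)$ is smooth: compose the smoothness of the raw field $\Sigma(\cdot)$ --- guaranteed by the stationary-GP model of Equation~\eqref{eq:Ker} --- with smoothness of the intrinsic mean in its data, obtained by applying the implicit function theorem to the first-order optimality condition of Equation~\eqref{eq:psi_iNSGP}, using smoothness and local geodesic convexity of $d^2(\cdot,\cdot)$.

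The main obstacle is keeping the reduction honest: Paciorek's proof uses the \emph{arithmetic} average $\tfrac{\Sigma_i+\Sigma_j}{2}$ in an essential way (it is the covariance that appears when two Gaussian smoothing kernels are multiplied), so one must ensure that what Definition~\ref{def:iNSGP} prescribes really is ``intrinsically smooth the field, then form the Paciorek average.'' If instead $\psi_{ij}$ is taken to be the intrinsic mean over the \emph{union} $\mathcal N(x_i)\cup\mathcal N(x_j)$ of the two neighborhoods, the arithmetic-average structure is lost and the cited theorem no longer applies directly; in that case I would fall back on a direct process-convolution argument in the spirit of~\cite{higdon1999non}, writing $k(x_i,x_j)$ as an inner product $\int_{\mathbb{R}^d} g_{x_i}(u)\,g_{x_j}(u)\,du$ of location-dependent kernels whose shape matrices are the intrinsic means, which automatically yields positive definiteness but requires re-deriving the normalizing (determinant) constants for the non-Euclidean mean. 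I therefore expect the cleanest route is the first one --- treating the intrinsic averaging as a preprocessing step on the length-scale field --- which stays within the scope of~\cite{higdon1999non,paciorek2004nonstationary} while still delivering the regionally adaptive, non-uniformly smooth behavior that motivates the construction.
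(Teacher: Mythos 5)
Your high-level route is the same as the paper's: the paper's proof is a single sentence appealing to the process-convolution (moving-average) construction of~\cite{higdon1999non}, asserting that because the intrinsic averaging stays in $S^+(d,\mathbb{R})$ the resulting covariance is positive definite. You are considerably more explicit about what must actually be checked, and the ``main obstacle'' you identify in your third paragraph is precisely the gap that the paper's proof glosses over. In~\cite{paciorek2004nonstationary} positive definiteness is not a consequence of $\frac{\Sigma_i+\Sigma_j}{2}$ merely being positive definite; it follows from the identity $k(x_i,x_j)=\int g_{x_i}(u)\,g_{x_j}(u)\,du$ for Gaussian kernels $g_x$ with shape matrix $\Sigma(x)$, and the arithmetic average together with the specific determinant prefactors of Equation~\eqref{eq:GloblNSGPcov} is exactly what that integral produces. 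Definition~\ref{def:iNSGP} and Algorithm~\ref{alg:KNN} (step~8) substitute the Riemannian mean of $\bar\Sigma_i$ and $\bar\Sigma_j$ for the arithmetic average inside $Q_{ij}$, and no family of kernels $g_x$ is exhibited whose pairwise inner products reproduce that formula. So the citation of~\cite{higdon1999non} does not by itself establish positive definiteness of the covariance the paper actually defines, and neither does your route~(i)--(iii), which (as you correctly note) proves validity of a \emph{different} covariance: the one obtained by first intrinsically smoothing the field $\Sigma(\cdot)$ and then applying the unmodified Paciorek average.

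Concretely: your ``clean route'' is a correct and complete proof of the preprocessing interpretation, and if the theorem is read that way your argument is sound and strictly more rigorous than the paper's. But if $\psi_{ij}$ is genuinely the pairwise intrinsic mean, as the definition and the algorithm state, then the fallback you sketch --- re-deriving a process-convolution representation and its normalizing constants for the non-Euclidean mean --- is not optional; it is the missing content of the proof, and it is carried out neither by you nor by the paper. Without it, positive definiteness of the Gram matrix $K$ on an arbitrary finite set of inputs is unproved (and is not obviously true: replacing the arithmetic average by a geometric-type mean in Equation~\eqref{eq:GloblNSGPcov} can in principle destroy positive definiteness even though every $\psi_{ij}$ is individually in $S^+(d,\mathbb{R})$). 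You deserve credit for locating the fault line exactly; to close the gap you would need either to commit to the preprocessing reading and restate the definition accordingly, or to supply the explicit kernel family $g_x$ for the intrinsic-mean construction.
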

\begin{proof}~\cite{higdon1999non} shows that the construction of a covariance function using a moving average specification (i.e., convolution of kernels) leads to a non-stationary process. In our construction (Equation~\ref{eq:psi_iNSGP}) we specify the convolution of kernels on a smooth manifold, preserving the properties of symmetric positive definite matrices, and, in turn, obtaining a non-stationary process definition of [6] with a positive definiteness of the covariance function.
\end{proof}

\subsection{Intrinsic Statistics for the characteristic length scales}

To compute the objective function $\psi$ for the space of spatially smooth kernels (${\{ \Sigma_i \}}_{i=1}^N$), we assume that $\psi$ lies on the Riemannian manifold $\mathcal{M}$ of positive definite matrices $S^+(d,\mathbb{R})$. The Riemannian manifold is assumed to be geodesically complete and endowed with a canonical affine connection.
\cite{calvo1991explicit} and~\cite{lenglet2006statistics} derived explicit forms of the geodesic distance on this manifold as: $d(\Sigma_i, \Sigma_j) = \sqrt{\frac{1}{2}\sum_{i=1}^{d}\log^2(\eta_i)}$, where $\eta_i$ denotes the $d$ eigenvalues of the matrix $ (\Sigma_i^{{-1}/{2}}\Sigma_j\Sigma_i^{{-1}/{2}}) \in S^+ $.

\cite{frechet1948elements} defined the empirical Riemannian mean $\bar{\Sigma} \in S^+(d,\mathbb{R})$, and \cite{karcher1977riemannian} gave an iterative form of $\bar{\Sigma}$ as a local minimum of the objective function $\lambda^2 : S^+(d,\mathbb{R}) \rightarrow \mathbb{R}^+$:

\vspace{-1em}
\begin{equation} \label{eq:mean}
\lambda^2(\Sigma_1,\ldots ,\Sigma_{{N}}) = \frac{1}{{N}} \sum_{k=1}^{{N}} d^2(\Sigma_k, \bar{\Sigma}).
\end{equation} 

Each of the $N$ normal distributions $p(.|\Sigma_k)$ is associated with a unique tangent vector $\beta_k \in S(d,\mathbb{R})$, such that $\bar{\Sigma}$ is mapped onto $\Sigma_k$ by an exponential map $\exp_{\bar{\Sigma}}(\beta_k) = \bar{\Sigma}^{1/2} \exp(\bar{\Sigma}^{-1/2}\beta_k\bar{\Sigma}^{-1/2}) \bar{\Sigma}^{1/2}$. The covariance matrix of a set of covariance matrices itself (i.e., in our model this is a covariance of a set of the CLS) is then defined as:

\vspace{-1em}
\begin{equation} \label{eq:var}
\Lambda_{\bar{\Sigma}} = \frac{1}{N-1} \sum_{k=1}^{N}\beta_k\beta_k^T.
\end{equation}

The covariance of the set of the CLS gives us a measure by which to evaluate the properties of its distribution on a manifold. For example, when the $tr(\Lambda)$ is small, it means that the set of covariances (CLS) are from the same normal distribution $\mathcal{N}(\Sigma|\bar{\Sigma}, \Lambda)$ and are highly correlated.

Note the similarity in the variable $N$ of \eqref{eq:mean} and $\{ \{ i, j \} \in \mathcal{N} \}$ of \eqref{eq:psi_iNSGP}, which shows that $\psi_{ij}$ is in fact the intrinsic Riemannian mean $\bar{\Sigma}$.

\subsection{Algorithm for the intrinsic non-stationary covariance function}\label{sec:method}
%%%%%%%%%%%%%%%%%%%%%%%%%%%%%%%%%%%%%%%%%%%%%%%%%%%%%%%%%%%%%%%%%%%%%%%%%%%%%%%%%%%%
%%%%%%%%%%%%%%%%%%%%%%%%%%%%%%%%%%%%%%%%%%%%%%%%%%%%%%%%%%%%%%%%%%%%%%%%%%%%%%%%%%%%

% Algorithm
{\footnotesize
\begin{algorithm}
%\SetAlgoSkip{}
\caption{Kriging for the Intrinsic Non-stationary Covariance Function}
\label{alg:method}
\begin{algorithmic}[1]
%\Require Training set $\{(x,y)\}_{i=1}^n$, Test points $\{x^*\}_{i=1}^m$.
%\Ensure Predictive densities $p(y^* | x^*, f)$.
\STATE  {\bfseries Input:} Training set $\{(x,y)\}_{i=1}^n$, Test points $\{x^*\}_{i=1}^m$.
\STATE {\bfseries Output:} Predictive densities $p(y^* | x^*)$.
%\STATE Find the hyperparameters for GP using MCMC~\cite{plagemann2008nonstationary} and for Alg.~\ref{alg:KNN} using 5-fold CV.
\STATE Initialize estimates of $\Sigma_i$ using Equation \eqref{eq:Ker}.
\STATE Find the characteristic length scale $\psi_{ij}$ using Algorithm \ref{alg:KNN}.
%\STATE Find nearest neighbors ${\cal{S}}_i$ for $\Sigma_i$ using Algorithm \ref{alg:KNN}.
%\STATE Compute intrinsic statistics of mean ($\bar{\Sigma}_i$) and variance ($\Lambda_i$) for $\Sigma_i$ based on ${\cal{S}}_i$ and using Eq. \eqref{eq:mean}, \eqref{eq:var} and \eqref{eq:normal}.
\STATE Compute the intrinsic nonstationary covariance function using Equation \eqref{eq:GloblNSGPcov}.% and \eqref{eq:psi_iNSGP}.
\STATE Compute the GP using $p(y^* | x^*, f) \sim \mathcal{N}(\mu,\sigma)$.% using the joint MCMC scheme \cite{plagemann2008nonstationary}.
%\STATE Compute GP using $p(y^* | x^*, f) \sim \mathcal{N}(\mu,\sigma)$.%GP(\mu,\sigma)$ %using Equation \eqref{eq:GP}.
\end{algorithmic}
  %\vspace{-0.4cm}
\end{algorithm}
%\vspace{-0.4cm}
}

Algorithm \ref{alg:method} describes our framework for implementing the intrinsic non-stationary covariance function, and Figure \ref{fig:Frame} depicts this general framework. We first obtain initial smooth estimates of the CLS ($\Sigma_i$) by Equation \eqref{eq:Ker}, and then we update $\Sigma_i$ with the aim of modeling the intrinsic function $\psi_{ij}$ using our approach proposed in Algorithm \ref{alg:KNN}. Our method maintains the appropriate smoothness in the latent space of $\Sigma_i$ due to the second-order intrinsic statistics on the Riemannian manifold. Additionally, our  method captures the correlation of the $\Sigma_i$ in its intrinsic latent space that the initial estimate failed to capture. Hence, the estimates for the CLS around sharp discontinuities and separated regions of the spatial data field are improved, as shown in Figure \ref{fig:Ker}. Finally, the CLS ($\psi_{ij}$) is used in Equation \eqref{eq:GloblNSGPcov} for the GP regression model.~\looseness-1

{\footnotesize
\begin{algorithm}
\caption{Intrinsic Characteristic Length Scale}
\label{alg:KNN}
\begin{algorithmic}[1]
\STATE  {\bfseries Input:} $\{ x_i,x_j \}$, $\{x_k,\Sigma_k\}_{k=1}^n$.
\STATE {\bfseries Output:}  $\psi_{ij}$.%$\{\bar{\Sigma}, \Lambda\}_{i=1}^n$
\STATE Find K nearest neighbors ${\cal N}_{i}$ for $x_i$ and ${\cal N}_{j}$ for $x_j$ in the input space.
\STATE	Remove the $\Sigma_i$'s from ${\{ \Sigma_i \}}_{i \in {\cal N}_i }$, such that, $| tr(\Lambda_i) |$ $<$ threshold.% which increases the variance in $tr(\Lambda_i)$ beyond a threshold.
\STATE Find ${\bar{\Sigma}}_i$ using $\lambda^2({\{ \Sigma_i \}}_{i \in {\cal N}_i })$ for Eq.~\ref{eq:mean}.
\STATE	Remove the $\Sigma_j$'s from ${\{ \Sigma_j \}}_{j \in {\cal N}_j }$, such that, $| tr(\Lambda_j) |$ $<$ threshold.% which increases the variance in $tr(\Lambda_j)$ beyond a threshold.
\STATE Find ${\bar{\Sigma}}_j$ using $\lambda^2({\{ \Sigma_j \}}_{j \in {\cal N}_j })$ for Eq.~\ref{eq:mean}.
\STATE Find $\psi_{ij}$ using $\lambda^2({\bar{\Sigma}}_i, {\bar{\Sigma}}_j)$ 
\end{algorithmic}
\end{algorithm}
}

For step (3) of Algorithm \ref{alg:method},~\cite{paciorek2004nonstationary} bound the ${\{ \Sigma_i \}}_{i=1}^{N}$'s to achieve the required smoothness and used the arithmetic mean to convolve $\Sigma_i$ and $\Sigma_j$ for its covariance function. We used the empirical Riemannian mean rather than the arithmetic mean and, therefore, are free from fixing the bounds on the initial estimates of ${\{ \Sigma_i \}}_{i=1}^{N}$. This allows the variable smoothness in Equation \eqref{eq:GloblNSGPcov} to naturally evolve in the space of symmetric positive definite matrices. Note, the thresholds of variance are bounded in the space of ${\{ \Sigma_i \}}_{i=1}^{N}$ and are not directly dependent on the input space. Additionally,~\cite{arsigny2006log} shows that the arithmetic mean causes larger determinants (than the original determinant) in the space of $S^+(d,\mathbb{R})$, which the Riemannian mean avoids.~\looseness-1

Algorithm \ref{alg:KNN} uses two levels of nearness measures: 1) the usual distance metric directly on the input space that measures spatial proximity, and 2) the intrinsic statistics of $\Sigma_i$ that measure its proximity on the manifold. Here, we used $| tr(\Lambda_i) |$ to measure the correlation of the neighboring $\Sigma_i$'s for its simplicity, but one could also use the K-nearest neighbors in the space of $\Sigma_i$s. It would be worth exploring how the different statistical measures on the manifold of the CLS space could improve the kriging estimates when sharp jumps exist in the underlying distribution.~\looseness-1

For example, a CLS computation of a geospatial location that is close to the boundary of its geophysical region is less reliable when it is the function of its input space alone (i.e., $\Sigma_i$ in Algorithm~\ref{alg:method}, Step 3). However, when the additional information of the neighboring CLS is incorporated into the model (i.e., Algorithm~\ref{alg:KNN}, Step 4 and 5), one could potentially recover the CLS of the boundary points that is closer (Equation~\ref{eq:mean}) to the CLS representative of its associated region. We show in the experimental section that this approach of Algorithm \ref{alg:KNN} is particularly useful for analyzing climate models. ~\looseness-1

For the initialization of the CLS in Algorithm~\ref{alg:method}, Step (3), one can use the numerical implementation of either~\cite{paciorek2004nonstationary},~\cite{higdon1999non}, or~\cite{fuglstad2013non}. In our experiments, we used~\cite{paciorek2004nonstationary}, because the empirical testing showed that it gave the best results for our application. Similarly, we used the numerical  implementation of~\cite{lenglet2006statistics} for the intrinsic statistics (Equations \eqref{eq:mean} and \eqref{eq:var}). ~\looseness-1

The K for the nearest neighbors and the threshold for the variance in Alg.~\ref{alg:KNN} are found using the 5-fold Cross Validation. To find the GP hyperparameters, for step (1) and step (5) in Alg.~\ref{alg:method}, we use a Markov Chain Monte Carlo (MCMC) sampling scheme that is similar to the one outlined in~\cite{plagemann2008nonstationary}.~\looseness-1

\vspace{-0.5em}

% How we implement it and use it for improving the results
% show figures to explain the method
\section{Experiments on climate related data}

To gain insight into the applicability of our proposed covariance function, we implemented and compared kriging with three different covariance functions: the widely used stationary anisotropic Mat\'{e}rn covariance function (statGP) of~\cite{matern1960spatial}, the baseline non-stationary covariance function (NSGP) of~\cite{paciorek2004nonstationary}, and the intrinsic non-stationary covariance function (iNSGP) that we propose in this paper. These methods were evaluated using two standard performance measures (as described below) for kriging. The three datasets used are: the smooth 2$d$ simulated dataset (SIM) given in ~\cite{paciorek2004nonstationary}, the geophysics driven synthetic\footnote{\url{http://www.psmsl.org/train\_and\_info/geo\_signals/gia/peltier/}} data (GIA) as modeled in~\cite{peltier2004global}, and the global-scale complex naturally occurring real\footnote{\url{http://www.psmsl.org/data/obtaining/}} dataset (TG) collected in~\cite{holgate2012new}. ~\looseness-1

\textit{The Simulated Dataset}. For this study, we are interested in comparing our method with the 2$d$ simulated functions that have been previously used in the non-stationary covariance function literature. The experimental set up is given in~\cite{paciorek2004nonstationary}. Even though the simulation function is non-stationary, it is fairly smooth and homogeneous, and it lacks the complex regional geophysics that is usually encountered in global-scale climate related data.~\looseness-1

\textit{The Synthetic Dataset}. For this study, we are interested in modeling the global-scale geophysical signal that is present in the climate datasets (such as tide gauge data). This synthetic dataset enables us to compare the three covariance functions (statGP, NSGP, iNSGP) with known geophysics boundaries.~\looseness-1%Furthermore, it and to vary the density of global-scale geo-spatial points. 

One such widely modeled geophysical signal is the Glacial-isostatic adjustment (GIA) model. We used the signal modeled in GIA~\cite{peltier2004global}, which gives the difference in the height between the sea surface and solid earth. Figure \ref{fig:Orig}(a) shows the distribution of the underlying distribution ($y$) after masking out the land mass. To examine the robustness of our proposed model, the experimental setup includes 25 independent runs of the GIA data. For each run, we randomly sampled 315 training points, added Gaussian noise ($\eta=0.2$) to sea level measurements, and tested on 946 independent random sampled points.~\looseness-1

\textit{The Real Dataset}. For this study, we are interested in a global-scale climate variable that contributes to future climate related risks, is known to have non-stationarity, and has complex regional geophysics in its dataset. Hence, we used tide gauge sites measuring relative sea level measurements across the coastal areas of the globe (see Figure \ref{fig:Orig}(b)). From the dataset given in~\cite{holgate2012new}, 747 locations were selected because they had consistent temporal records from the years 1993 to 2012. To focus our study on the geo-spatial set up, we used 747 locations of tide gauge sites to construct the rate of change of the sea level (mm/year), where the annual sea level rate of change was obtained from linear regression estimates in the temporal dimension. The experimental set up then includes 25 independent runs with randomly sampled 374 and 373 locations for respectively the training and test set. ~\looseness-1

\textit{Evaluation metrics}. We report performance with respect to two widely adopted metrics in kriging: the standardized mean squared error (sMSE) and the negative log predictive density (nLPD). sMSE measures the point estimate errors in the predictions and is given by: $sMSE = n^{-1}\sum_{i=1}^{n} \mathrm{var}(y)^{-1} (y_i - y^*_i)^2 $, where $n$ is the number of test points, $y^*_i$ is the predictive mean at input space $x_i$, and $\mathrm{var}(y)$ is the sample variance. nLPD measures not just the point estimates error, but also the error variance of the predictions and is given by: $nLPD = - n^{-1}\sum_{i=1}^{n}\log(p(y_i | x_i))$.~\looseness-1

\begin{table}[t]
\begin{center}
\caption{Evaluations of the simulated (SIM), synthetic (GIA),  and real (TG) datasets. GIA and TG dataset results are in the units of mm/year.}
{%\footnotesize
\begin{tabular}{| c | c | c | c | c | c | c | c | c |}	\hline
  %\diaghead{\theadfont Diag ColumnmnHead II} 
%{Region\\}{Data\\}			
	Methods 	& 	\multicolumn{2}{|c|}{SIM}	& 	\multicolumn{2}{|c|}{GIA (All)}	&  \multicolumn{2}{|c|}{GIA (Reg.1)}	& 	\multicolumn{2}{|c|}{TG (All)}	\\ %\hline
  			
  				&	sMSE		&	nLPD			&	sMSE		&	nLPD			&	sMSE	&	nLPD					&	sMSE		&	nLPD			\\ \hline
  	StatGP		&	0.0240	&	0.311			&	0.58 		&	3.08			&	1.57	&	19.23					&	0.85		&	2.81 			\\
  	NSGP		&	0.0237	&	0.278			&	0.56 		&	2.00			&	1.24	&	7.30					&	0.75		& 	2.82			\\
  	iNSGP		&\textbf{0.0235}	&	\textbf{0.271}			&	\textbf{0.54} &	\textbf{1.94}	&	\textbf{1.03} & \textbf{6.90}		&	\textbf{0.71} &	\textbf{2.78}	\\ \hline
\end{tabular}
}
\label{tab:Eval}
\vspace{-1.5em}
\end{center}
\end{table}

\vspace{-0.8em}

\section{Results}
\label{sec:res}

\begin{figure*}[t]
\centering
        \includegraphics[width=.9\linewidth]{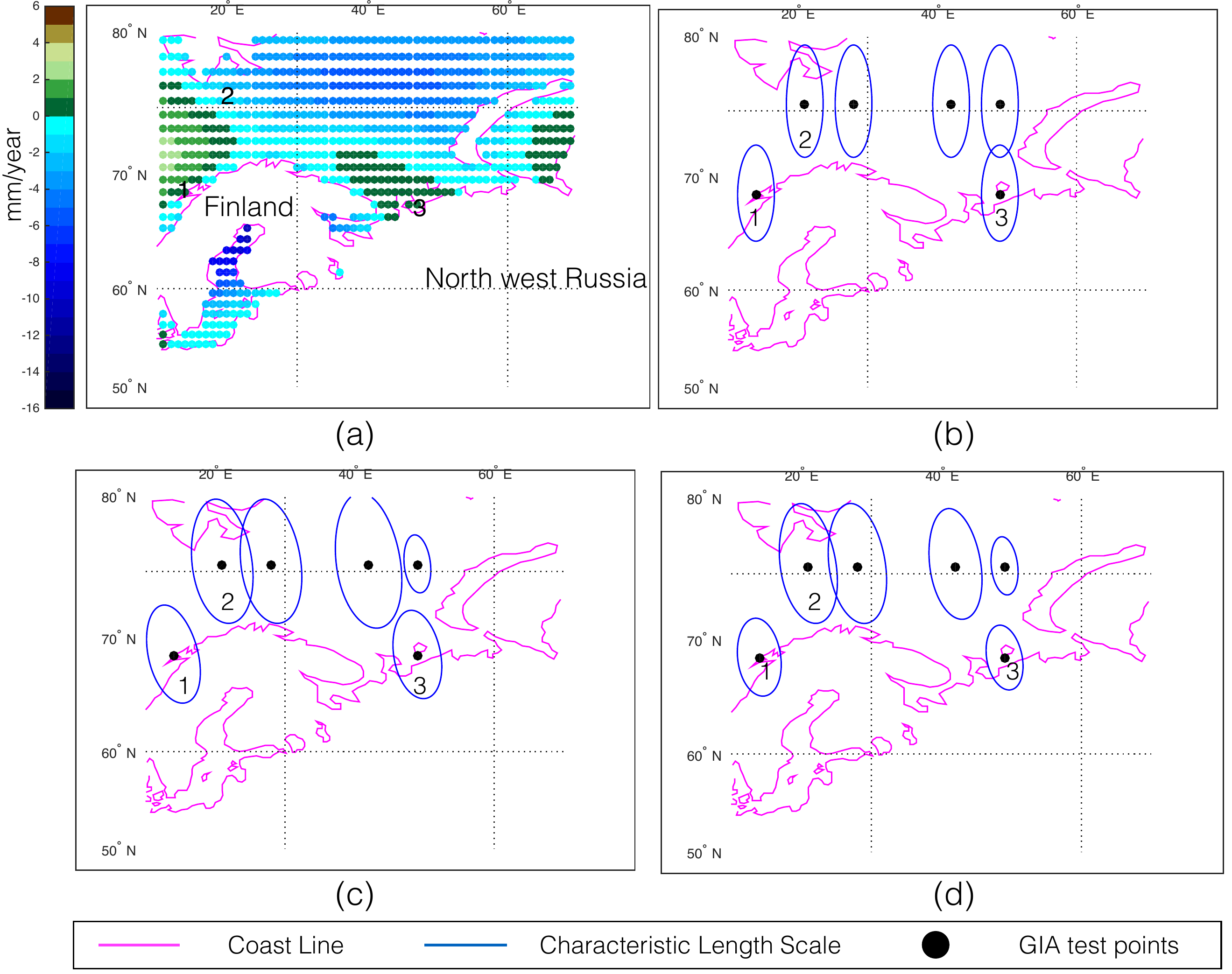}
\caption{GIA dataset around Barrent sea (Reg.1) (a) GIA data (true values), (b) Stationary GP (statGP) characteristic length scale (CLS), (c) Non-stationary GP (NSGP) CLS, and (d) Intrinsic non-stationary GP (iNSGP) CLS.} %(d) Intrinsic nonstationary GP with data fusion (idfNSGP).} 
\label{fig:Ker} 
\vspace{-1.5em}
\end{figure*}

Table~\ref{tab:Eval} summarizes the performance for the three covariance functions (statGP, NSGP, and iNSGP) when implemented on the three datasets (SIM, GIA, and TG). The performance of iNSGP is particularly improved for the GIA and TG datasets, while it does not show much of an improvement over NSGP for the SIM dataset. This is mainly because the SIM dataset is fairly smooth. Furthermore, the SIM data does not suffer from the issue of a non-uniformly smooth spatial boundary (regional geophysics) that is present in GIA and TG datasets Fig~\ref{fig:Orig}(a).~\looseness-1

For example, Fig~\ref{fig:Ker}(a) shows the true values of GIA data near the Barent sea (Reg.1), which is a marginal sea of the Arctic ocean. The CLS estimates for this region (which is parametrically plotted as ellipses) from statGP, NSGP, and iNSGP are shown in Fig~\ref{fig:Ker}(b,c,d). Note the differences in the shape of the ellipses numbered (1,2,3) in Fig~\ref{fig:Ker}(b,c,d). These points (1,2,3) correspond to the geospatial boundary points of the regions (1,2,3) in Fig~\ref{fig:Ker}(a). While statGP has the same shape for all of the points, NSGP shows some variation in its shape. Even so, they are largely similar. On the other hand, iNSGP has a distinct variation in the the points (1,2,3) and well represents the differences in these three regions. This is explained by the superior performance (Tab.~\ref{tab:Eval}) of iNSGP over NSGP and statGP (particularly in the GIA Reg.1).~\looseness-1

Table~\ref{tab:Eval}, shows the average values for the error metrics. The maximum standard deviation in sMSE for SIM was 0.003, for GIA was 0.02, and for TG was 0.5. For each of these runs, iNSGP performed as well as NSGP for the SIM dataset, and outperformed NSGP and statGP for the TG and GIA datasets. The error values for the SIM dataset when applying the methods of StatGP and NSGP were similar to the error values found in~\cite{paciorek2004nonstationary}.~\looseness-1

From all three datasets, the real dataset (TG) shows the most improvement in its error metric when applying the iNSGP method. The high improvement is because TG has large variability in the regional geophysics; therefore, the intrinsic non-stationary covariance function is able to better model the underlying true distribution than the stationary and non-stationary covariance functions.~\looseness-1

%\vspace{-.5em}
\section{Discussion and Concluding Remarks}

% summary of everything that you did, what was new, what are its prospects in future
We introduced a new class of covariance functions, which we call an intrinsic non-stationary covariance function. This covariance function is especially useful in modeling global scale geospatial datasets that have a non-stationary process and non-uniformly smooth spatial boundaries due to regional geophysics. We developed a framework to apply this covariance function for kriging. Using the sea level dataset from the Glacial-isostatic adjustment model and tide gauge measurements, we demonstrated our framework's improved performance in kriging when compared with the non-stationary covariance function.~\looseness-1

% Drawback of NSGP
There are many facets of implementation of the intrinsic non-stationary covariance function that could be undertaken in the future. One of the important issues, especially outside of the geo-spatial community, is large datasets. Specifically, one can use sparse regression techniques~\cite{lawrence2003fast} and a computationally cost effective metric for the intrinsic statistics on the positive definite matrices of the characteristic length scale~\cite{arsigny2006log} to deal with large datasets.~\looseness-1% [Another exploration is MCMC ...]

%Future work
Other climate related variables, such as temperature and precipitation records, face similar modeling issues as the sea level dataset that we used for our application. Future work will explore the application and methods of our intrinsic covariance function to such geospatial datasets, with the larger goal of aiding in the assessment of future climate related risks.~\looseness-1

%\newpage

%\bibliographystyle{named}
\bibliographystyle{unsrt}
\small{ % this is allowed
%\bibliography{nips15cad289draft1}
\bibliography{iNSGPcad_15a}
%\nocite{xxx}
}

\end{document}